\newtheorem{lemma}{Lemma}
\def\BibTeX{{\rm B\kern-.05em{\sc i\kern-.025em b}\kern-.08em
    T\kern-.1667em\lower.7ex\hbox{E}\kern-.125emX}}
\def\@IEEEsectpunct{~~}
\renewcommand\paragraph{\@startsection{paragraph}{4}{\z@}%
  {-1.25ex \@plus -1ex \@minus -0.2ex}%
  {0pt}%
  {\bfseries}%
}
\newcommand{\maybe@addperiod}[1]{%
  #1\@addpunct{.}%
}
\newcommand{\dataproof}{training data proof\xspace}
\newcommand{\dataproofs}{training data proofs\xspace}
\newcommand{\Dataproofs}{Training data proofs\xspace}
\definecolor{mygreen}{RGB}{0, 128, 0}
\begin{document}

\title{Position: Membership Inference Attacks Cannot Prove that a Model Was Trained On Your Data}

\newboolean{arxiv}
\setboolean{arxiv}{true}
\ifthenelse{\boolean{arxiv}}{
  \author{
    Jie Zhang$^1$\quad Debeshee Das$^1$ \quad Gautam Kamath$^{2}$ \quad Florian Tramèr$^1$ \\ \\
    { $^1$ETH Zurich \quad $^2$University of Waterloo}
  }
}{
}

\maketitle

\begin{abstract}
We consider the problem of a \emph{\dataproof}, where a data creator or owner wants to demonstrate to a third party that some machine learning model was trained on their data. \Dataproofs 
play a key role in recent lawsuits against foundation models trained on Web-scale data.
Many prior works suggest to instantiate \dataproofs using \emph{membership inference attacks}. We argue that this approach is statistically unsound: to provide convincing evidence, the data creator needs to demonstrate that their attack has a low false positive rate, i.e., that the attack's output is unlikely under the \emph{null hypothesis} that the model was \emph{not} trained on the target data. Yet, sampling from this null hypothesis is impossible, as we do not know the exact contents of the training set, nor can we (efficiently) retrain a large foundation model.
We conclude by offering three paths forward, by showing that membership inference on special canary data, watermarked training data, and data extraction attacks can be used to create sound \dataproofs.

\end{abstract}

\begin{IEEEkeywords}
membership inference, machine learning privacy, training data proof, foundation models
\end{IEEEkeywords}

\section{Introduction}
Large generative models trained on vast amounts of internet data, such as GPT-4~\cite{achiam2023gpt}, DALL-E~\cite{openai_dalle3}, or LLaMa~\cite{touvron2023llama}
have sparked a debate about copyright infringement~\cite{wei2024evaluating, mainidi2024}. 
A number of recent lawsuits allege that non-permissive data was used to train these models, e.g, in Getty Images v.\ Stability AI~\cite{GettyImages_v_StabilityAI}, Doe v.\ Github~\cite{doe_v_github_2024}, or New York Times v.\ OpenAI~\cite{nyt_v_OpenAI}.

These debates call for methods that let a data creator or owner provide evidence (e.g., to a judge) that their data was used to train some machine learning (ML) model. We call this a \dataproof (see~\Cref{fig:intro}). This problem is closely related to \emph{membership inference attacks}, a form of privacy attack that tests whether a given data point was used to train a model~\cite{HomerSRDTMPSNC08,shokri2017membership}.
Unsurprisingly then, a growing body of work proposes to re-purpose membership inference (MI) attacks to create \dataproofs~\cite{shi2023detecting, meeus2024did, meeus2024inherent, eichler2024nob, zhang2024min, duarte2024cop, meeus2024copyright} (this has also been referred to as a \emph{dataset inference attack}~\cite{maini2021dataset}).
The core principle of these techniques is to show that the target model exhibits some behavior (e.g., a low loss on some data) that would be highly unlikely if the model had not been trained on that data. 

We argue that membership inference attacks are fundamentally flawed for this task and cannot provide convincing evidence of the use of (arbitrary) data in training. 
Our position is summarized as follows:

\begin{tcolorbox}
\textbf{Position:} Membership inference attacks \textbf{cannot} give reliable evidence of data usage in the training of production ML models, because it is impossible to bound the attack's false positive rate (FPR).
\end{tcolorbox}

The core issue with using an MI attack to construct a \dataproof is the need to estimate a model's behavior in a \emph{counterfactual} scenario, where it was not trained on the target data. This is necessary in order to show that the model's observed behavior is ``unusual''.
We formalize this notion (which is often left implicit in the literature) by drawing on the standard interpretation of a membership inference attack as a hypothesis test~\cite{yeom2018privacy}.
We can then recast the above counterfactual estimation problem as that of estimating the model's behavior \emph{under the null hypothesis} that the target data was not present in the model's training data.

Yet, this estimation is \emph{impossible} for large-scale proprietary models. 
Since the model's full training set is unknown and exact training procedures are opaque (and prohibitively expensive), we cannot simply train a new model without the target data. In summary, we cannot \emph{sample from the null hypothesis}, and thus cannot bound the attack's \emph{false positive rate} (the probability that the data creator would wrongly accuse the model developer of using their data for training).

\begin{figure}[t]
    \centering
    \includegraphics[width=\linewidth]{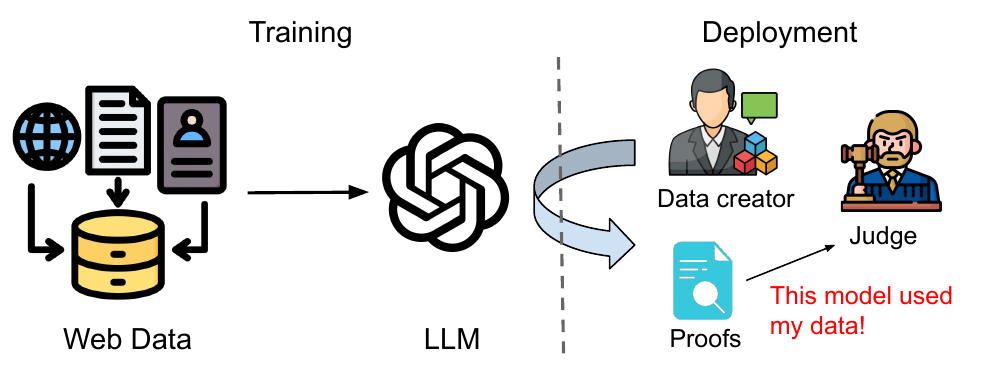} 
    \caption{In a \emph{\dataproof}, a data creator aims to convince a third party
    (e.g., a judge) that a machine learning model was trained on their data. A common proposal in the literature is to use membership inference attacks for this purpose.}
    \label{fig:intro}
\end{figure}

Existing work~\cite{meeus2024did, meeus2024inherent, shi2023detecting, eichler2024nob, duarte2024cop, mainidi2024, zhang2024min} proposes various heuristics to estimate an attack's 
false positive rate, but we argue that all such approaches are statistically unsound, providing plausible deniability for model developers:
\begin{enumerate}
    \item Some works~\cite{shi2023detecting, zhang2024min, meeus2024did, duarte2024cop} estimate the FPR on data that is known not to be in the training set (e.g., data created after the model was trained).
    However, prior work~\cite{meeus2024inherent, das2024blind, mainidi2024, duan2024membership} shows that this introduces distribution shifts between the training members and non-members, which leads to unreliable evaluations of MI attacks.
    Here, we build on these works and argue that this issue makes it impossible for \emph{any} MI attack to give convincing evidence for a \dataproof.
    
    \item As a response, recent works have suggested to use ``debiased''
    sets of non-members to evaluate an attack's FPR~\cite{meeus2024inherent, eichler2024nob}. We argue that while this approach can alleviate the distribution shift issue, it results in a \dataproof with very low power and applicability.

    \item Finally, a line of work on \emph{dataset inference}~\cite{maini2021dataset, mainidi2024} suggests evaluating an attack's FPR using \emph{held-out counterfactuals} of a piece of data (e.g., unpublished drafts of a news article or book).
    We show this approach is also unsound because it ignores the \emph{causal} effect that publishing data may have on other samples in the model's training set. That is, even if a model is \emph{not} trained on some particular piece of target data being tested, the mere act of publishing this target data may influence other content that the model was trained on. 
\end{enumerate}

Essentially, we argue that MI attacks (both currently existing and hypothetical future ones) cannot hope to provide convincing evidence of training data usage.

We then show that three alternative paradigms can yield convincing \dataproofs.
The first paradigm borrows from the literature on privacy auditing~\cite{carlini2019secret,JagielskiUO20,NasrSTPC21, steinke2023privacy} and injects randomly sampled \emph{canaries} into the data, and then tests how the model \emph{ranks} the selected canary among all alternatives. We formalize 
this idea and highlight the conditions under which it provides a sound \dataproof.

In the second paradigm, data owners can add watermarks to their data before releasing it publicly~\cite{gloaguen2024discovering,zhong2024watermarking}. This watermarking process allows them to later test whether a model has been trained using their watermarked data. The detection process works as a hypothesis test that can identify unauthorized data usage with high confidence (low false positive rate)~\cite{sander2025watermarking}.

The third paradigm uses stronger \emph{data extraction attacks}~\cite{carlini2021extracting}, to show that we can recover non-trivial portions of the data from the trained model. In this way, we bypass the need to estimate the counterfactual, as we assume that long-form data regurgitation would happen with \emph{negligible chance} if the model was not trained on that data (this is the informal argument made in some of the aforementioned lawsuits, which we make more rigorous in this work).

\section{\Dataproofs via Hypothesis Testing}
\label{sec:hypothesis}

We consider the case of a data creator or owner $C$ who wants to show that some target data%
\footnote{We make no assumption on the ``granularity'' of the target data here. This could be an individual piece of text or an image, or an entire collection of texts and/or images.} $x$ was used by a model developer $M$ to train some model $f$.

The data creator thus wants to first perform \emph{membership inference} to test whether $x$ was part of the model's training data, and then provide \emph{convincing evidence} to a third party that this inference is likely correct. 

It is natural to model this problem as a \emph{hypothesis testing problem}. Indeed, this is the approach taken either implicitly or explicitly in many prior works on membership inference (not necessarily with the goal of building \dataproofs)~\cite{mainidi2024, yeom2018privacy, carlini2022membership, meeus2024copyright}.
In these works, the \emph{null hypothesis} and \emph{alternative hypothesis} are  informally defined as follows:
\begin{align*}
    H_0&: \text{``the data $x$ was not in the training set of the model $f$''}\\
    H_1&: \text{``the data $x$ was in the training set of the model $f$''}
\end{align*}

The goal of the attack is then to \emph{reject the null hypothesis}, i.e., to show that the observed model behavior is unlikely to occur if $H_0$ were true. 
As we first argue, formalizing what these hypotheses mean in practical settings is actually non-trivial, and often overlooked in the literature.

\paragraph{What is the null hypothesis exactly?}
Most research papers on membership inference attacks focus on what we call ``in-the-lab'' settings, wherein researchers control the entire data generation process. That is, researchers train a model themselves and know what is part of the training set and what is not.
In this case, the implicit meaning behind $H_0$ and $H_1$ is clear: the training dataset $D$ is randomly sampled from some larger data universe $U$ (e.g., the CIFAR-10 dataset~\cite{Krizhevsky09}). Thus, the null hypothesis $H_0$ formally means that $x \in D$, while the alternative hypothesis $H_1$ states that $x \in U \setminus D$.

However, when setting up an MI attack in practice (e.g., for a \dataproof against a production model such as ChatGPT), it is less clear what $H_0$ and $H_1$ formally mean. Here, we do not know what $D$ is, or how exactly it was sampled from some broader data universe (that's the whole point!). 
What should the null hypothesis encompass here? Should it cover models trained on \emph{all} possible training sets that do not contain $x$? What about training sets that contain some $x' \approx x$? Or hypothetical training sets that contain $2^{100}$ samples?

It is thus clear that we need a more precise formalization of what the null hypothesis actually means, if we plan to reject it. We propose such a formalization here.
Assume that the model developer collects some dataset $D$. We assume no knowledge about $D$ or how it was collected. To set up our hypotheses, we now imagine two possible worlds.  
We suppose that $x \in D$ (i.e., the target data \emph{was} collected for training).\footnote{We could also assume the opposite, and then have the model developer flip a coin to \emph{add} $x$ to $D$.} Then, the model developer flips a fair coin $b$. If $b=0$, they remove $x$ from $D$ to get $D_0$. If $b=1$, they do not remove $x$ and we have $D_1 = D$. Next, they train the model 
$f \gets \texttt{Train}(D_b)$.
Now, our hypotheses are:
\begin{align*}
	H_0&: \text{The model $f$ was trained on $D_0$ (i.e., without $x$)} \\
	H_1&: \text{The model $f$ was trained on $D_1$ (i.e., with $x$)}
\end{align*}

Intuitively, we can think of these hypotheses as modeling a plausible real-world scenario where the data $x$ may get omitted from data collection, either inadvertently (e.g., due to some error in data crawling) or intentionally (e.g., due to an explicit filter that removes copyrighted data). 

\paragraph{How to reject the null hypothesis?}
As in any standard hypothesis test, our goal is to \emph{reject the null hypothesis}, i.e., to show that $H_0$ is unlikely to be true given the observed model.

To do this, we select some test statistic $T(f, x)$ (e.g., the model's loss on $x$), and then show that the obtained value is extremely unlikely under the null hypothesis.
We define a \emph{critical region} (or rejection region) $S$ so that 
we reject the null hypothesis if 
$T(f, x) \in S$. We typically select $S$ so that the false positive rate (i.e., the chance that we conclude that the model was trained on our data, even though it was not) 
is bounded by some small value $\alpha$.
But \emph{crucially}, this requires that we formally or empirically model the distribution of the test statistic under the null hypothesis.
Formally, the false positive rate is given by: 
\begin{equation}
\label{eq:fpr}
\text{FPR} = \Pr_{f \sim \texttt{Train}(D_0)}[T(f, x) \in S \mid H_0] \;,
\end{equation}
where the probability is taken over the randomness of training $f$ and of 
computing the test statistic $T$. 
Analogously, the test's true positive rate is given by:
\begin{equation}
\label{eq:tpr}
\text{TPR} = \Pr_{f \sim \texttt{Train}(D_1)}[T(f, x) \in S \mid H_1].
\end{equation}

To demonstrate convincingly that the model $f$ is trained on the data $x$, the data creator or owner thus needs to show evidence of a test statistic that allows for the null hypothesis to be rejected at a low FPR.

\paragraph{What is the test statistic?}
Modern MI attacks are typically \emph{black-box} (i.e., they only assume query-access to the model $f$), and employ a test statistic $T(f, x)$ that depends on the model's \emph{loss} on the target sample, $\ell(f, x)$.
Early attacks (e.g.,~\cite{yeom2018privacy}) compute a global threshold $\tau$ 
for all samples $x$, and simply reject the null hypothesis (i.e., predict that $x$ is a member)
if $\ell(f, x) < \tau$.

More recent attacks (e.g.,~\cite{carlini2022membership, ye2022enhanced}) 
use a \emph{likelihood ratio}~\cite{neyman1933ix} of the two hypotheses as the test statistic, i.e., 
\begin{equation}
    T(f, x) = \frac{\mathcal{L}(\ell(f, x) \mid H_0)}{\mathcal{L}(\ell(f, x) \mid H_1)} \;,
\end{equation}
where the likelihood $\mathcal{L}$ is taken over the randomness of training the model $f$.
These attacks then predict that $x$ is a member if $T(f, x) \geq \tau$ for some threshold $\tau$.
Different attacks vary in the methods used to estimate these likelihoods and to select the threshold $\tau$, but these details are not important here.

\paragraph{What is the false positive rate?}
If the data creator $C$ wants to use an MI attack to convince a third party (e.g., a judge) that their data $x$ was used to train a model $f$, they need to provide evidence that their attack is statistically sound.
Specifically, they need to show that the attack has a low false positive rate: the probability that they would falsely accuse the model developer of using their data is very low.
So they would need to show that the quantity in~\Cref{eq:fpr}, the FPR, is small.


Again, when researchers use MI attacks ``in the lab'', where the training set is small and the process for creating it is fully known (e.g., by subsampling the CIFAR-10 dataset), then evaluating this probability is feasible. We can instantiate the hypothetical world modeled by $H_0$ and train multiple target models $f$ to estimate the probability in~\Cref{eq:fpr}.

However, when using an MI attack for a \dataproof against a large production model (like ChatGPT, or DALL-E), we claim that estimating this probability is \emph{impossible!} Indeed, we do not know what the training data $D$ is or how it was sampled (this is why we want a \dataproof in the first place). And even if we did know all training samples (with the exception of the bit $b$ determining whether the data $x$ was used or not), and the exact training procedure $\texttt{Train}$ used for the model $f$ (which, in most cases, we do not, as the training procedures and architectures of modern large-scale models are often proprietary), then sampling from the null hypothesis would still be prohibitively expensive as it would require training new large models.

Many existing works, which are motivated by proving membership in real-world production settings~\cite{meeus2024did, meeus2024inherent, shi2023detecting, eichler2024nob, duarte2024cop, mainidi2024, zhang2024min}, aim to approximate the FPR
in~\Cref{eq:fpr} without the need to explicitly sample from the null hypothesis (i.e., by retraining models $f$).
We show in \Cref{sec:case_studies} that all these methods are unsound.

\paragraph{An alternative test: Ranking counterfactuals.}
We now propose an alternative formalization of hypothesis testing that is sound and tractable (under suitable assumptions).
Here, we make an additional assumption that the data creator originally samples the target data $x$ randomly from some set $\mathcal{X}$, before $x$ is collected by the model developer.
For example, a news provider might add a uniformly random 9-digit identifier to each news article they post online.

A natural test statistic---originally proposed by~\citet{carlini2019secret}---is to \emph{rank} all samples in $\mathcal{X}$ according to the model $f$, and output the relative rank $\texttt{rank}(f, x, \mathcal{X})$ of the true target $x$.
For example, we can compute the collection of losses $\{\ell(f, x')\}$ for all $x' \in \mathcal{X}$, sort them, and then check the position of the target's loss
$\ell(f, x)$. If the target's loss is very low in this list, it is likely that the model was trained on it, rather than on any of the alternatives.
In our above example with random identifiers assigned to news articles, we would compute the model's loss on the sampled identifier (say 456-844-200), and then rank it among the losses on all other possible identifiers.\footnote{If the space $\mathcal{X}$ is too large to enumerate fully, we can estimate this rank via random sampling from  $\mathcal{X}$, as done in~\cite{carlini2019secret}.}

We then reject the null hypothesis if:
\begin{equation}
    \label{eq:rank_test}
    \texttt{rank}(f, x, \mathcal{X}) \leq k\;,
\end{equation}
for some constant $k \geq 1$.
The false positive rate of this membership inference attack is given by: 
\begin{equation}
\label{eq:fpr_rank}
\text{FPR} = \Pr_{\substack{x \sim \mathcal{X} \\ f \sim \texttt{Train}(D)}}[\texttt{rank}(f, x, \mathcal{X}) \leq k \mid H_0]
\end{equation}

Note that we are still taking probabilities over the randomness of the model training. So it would seem that we have not gained anything, and we still need 
to train new models to estimate this error rate. However, if the model $f$ is \emph{independent} of the random variable $x$ (note that we condition on $H_0$, i.e., on $f$ \emph{not} being trained on $x$), the FPR takes a very simple closed form:

\begin{lemma}
\label{lemma:fpr}
Assume that $x$ is sampled uniformly at random from $\mathcal{X}$ \emph{independently} of the creation of the training set $D$ and model training $f \sim \texttt{Train}(D)$.
Then, the FPR in \Cref{eq:fpr_rank} satisfies:
\begin{equation}
    \emph{FPR} \leq k / |\mathcal{X}| \;.
\end{equation}
\end{lemma}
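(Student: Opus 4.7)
The plan is to condition on the model $f$ and exploit the independence of $x$ from the entire training process under $H_0$. The crucial observation is that under the null hypothesis, $f$ is trained on $D_0$, which by definition does not contain $x$; combined with the assumption that $x$ is drawn uniformly from $\mathcal{X}$ independently of $D$ and of the training randomness, this gives that $f$ and $x$ are independent random variables. So the task reduces to bounding, for a fixed $f$, how many elements of $\mathcal{X}$ can be ``top-ranked''.

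Next, I would fix an arbitrary realization of $f$ and study the map $x' \mapsto \texttt{rank}(f, x', \mathcal{X})$. With a natural definition of rank (e.g., a tie-breaking convention that makes the rank a bijection between $\mathcal{X}$ and $\{1, \dots, |\mathcal{X}|\}$, or the ``max rank'' convention that counts all elements with loss no larger than the target's), the set
\[
A_f \;=\; \{\, x' \in \mathcal{X} \,:\, \texttt{rank}(f, x', \mathcal{X}) \leq k \,\}
\]
satisfies $|A_f| \leq k$ deterministically. Because $x$ is uniform on $\mathcal{X}$ and independent of $f$, one then has
\[
\Pr_{x}\!\left[\texttt{rank}(f, x, \mathcal{X}) \leq k \,\middle|\, f\right] \;=\; \frac{|A_f|}{|\mathcal{X}|} \;\leq\; \frac{k}{|\mathcal{X}|}.
\]
Taking expectation over $f \sim \texttt{Train}(D_0)$ (and any remaining randomness used to compute the rank) preserves this bound by the tower property, and the resulting expectation is exactly the quantity defined in Eq.~\eqref{eq:fpr_rank}.

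The only real subtlety, and the step I would be most careful about in a formal write-up, is the handling of ties in the loss values $\ell(f, x')$. Under a ``min rank'' convention (all tied elements receive the smallest rank among them), $|A_f|$ could be as large as $|\mathcal{X}|$ whenever many scores coincide, so the bound would fail. Fixing a deterministic tie-breaking rule (or equivalently adopting the max-rank definition) is therefore essential, and I would state this convention explicitly before applying the counting argument. Everything else is essentially bookkeeping: independence plus the pigeonhole-style inequality $|A_f| \leq k$.
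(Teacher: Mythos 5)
Your proof is correct and follows essentially the same route as the paper's: swap the order of the probabilities, use the independence of $x$ and $f$ under $H_0$, and observe that for any fixed $f$ at most $k$ elements of $\mathcal{X}$ can have rank at most $k$, then average over the training randomness. Your remark on tie-breaking matches the paper's parenthetical note that ties can only make the count smaller (i.e., the rank convention is taken so that $|A_f| \leq k$ always holds), so nothing further is needed.
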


\begin{proof}
    We can rewrite \Cref{eq:fpr_rank} as
    \begin{align*}
        \text{FPR} &= \Pr_{f \gets \texttt{Train}(D)}\ \Pr_{x \sim \mathcal{X}} [\texttt{rank}(f, x, \mathcal{X}) \leq k \mid H_0]\\
        &\leq \Pr_{f \gets \texttt{Train}(D)} [k / |\mathcal{X}|]\\
        &= k / |\mathcal{X}| \;.
    \end{align*}
\end{proof}

The inequality follows from the fact that for any \emph{fixed} $f$, at most $k$ samples in $\mathcal{X}$ can have rank at most $k$ (it can also be fewer in case of ties). We are then left with a probability over $f$ of a constant.

To apply this lemma and get a sound computation of the FPR, we need the assumptions in the lemma to be met.
That is, $x$ has to be sampled uniformly at random, \emph{and} independently of the model $f$. In \Cref{sec:solutions} we argue that these assumptions hold in three special cases:
(1) when we apply membership inference to specially designed random \emph{canaries}; (2) when we detect the watermarked data and (3) when we show a stronger \emph{data extraction} attack rather than mere membership inference.

However, as we will show in the next section, existing \dataproofs proposed in the literature fail to satisfy one or both of these assumptions.
Thus, these \dataproofs are unsound.

\section{Statistically Unsound \Dataproofs}
\label{sec:case_studies}

Since it is impossible to directly determine the true FPR defined in~\Cref{eq:fpr}, existing works employ various strategies to estimate an approximate FPR value. In this section, we review three main paradigms used for estimating this value, discuss the key issues associated with each approach, and provide case studies to illustrate these issues.

\subsection{Collecting Non-member Data a Posteriori}
\label{ssec:case_studies_1}
This is the most natural and widely proposed method for estimating the FPR of a membership 
inference attack on foundation models (e.g.,~\cite{meeus2024did, shi2023detecting, zhang2024min, duarte2024cop, duan2024membership}).
The idea is to take a set of data that is \emph{known} to not be in the model's training set, and then evaluate the test statistic on this set to approximate the null hypothesis.

For example, if we know that a foundation model's training set was collected before some ``cutoff'' date $T$, and the model has not been updated since, then any new data produced after time $T$ is necessarily a non-member of the training set. Alternatively, if a party has access to some held-out data that was not available to the model developer (e.g., data that is not publicly available on the Web), then this data could also safely be considered to consist of non-members.

Let's denote this set of known non-members as $\mathcal{X}$. Then, various prior works 
(e.g., \cite{meeus2024did, shi2023detecting, zhang2024min, duarte2024cop}) compute the FPR of an MI attack as:
\begin{equation}
\label{eq:fpr_aposteriori}
\Pr_{x' \in \mathcal{X}} [T(f, x') \in S] \;,
\end{equation}
Note that estimating this probability is easy, as it just requires evaluating the trained model $f$ on samples drawn from $\mathcal{X}$.
Thus, it would be easy to convince a third party that this value is small. But this \dataproof would be unsound, as we argue below.


\paragraph{Issue \#1: The ``bad'' model.}
The FPR computed in \Cref{eq:fpr_aposteriori} differs from the original goal in~\Cref{eq:fpr} in a crucial way.
Instead of taking the probability over the randomness of training $f$ (and evaluating on a fixed target $x$), we now instead take the probability over a set of different samples $\mathcal{X}$.
It is not at all clear how the quantity in \Cref{eq:fpr_aposteriori} should relate to the one we want in \Cref{eq:fpr}. In particular, maybe the model developer got unlucky and the actual model $f$ that was trained is simply a ``bad'' model that behaves unusually on the target $x$, but not on the other non-members in $\mathcal{X}$, as illustrated below.

\begin{figure}[h]
\begin{tcolorbox}[colback=gray!5!white, colframe=gray!80!black]
Suppose that when not trained on $x$, the  model's loss on $x$ follows a Bernoulli distribution: 
\[
\ell(f, x) = \begin{cases}
0 & \text{with probabilitiy $1/2$}\\
1 & \text{with probabilitiy $1/2$}\\
\end{cases}
\]
For all other samples $x' \in \mathcal{X}$ that are not in the training set, we have $\ell(f, x') = 1$.\\[-0.5em]

Suppose we reject the null hypothesis that $x$ was not used in training if we observe that $\ell(f, x) < 0.5$.
This test has a high FPR (according to \Cref{eq:fpr}) of $1/2$.
Yet, the ``approximation'' in \Cref{eq:fpr_aposteriori} yields $0$.
\end{tcolorbox}
\end{figure}

This issue is inherent to trying to directly approximate the FPR in \Cref{eq:fpr}.
Since we only have access to one \emph{fixed} model $f$, there is no way to estimate this probability.

Thus, going forward, we will consider that we instead aim to use the rank-based hypothesis test from \Cref{eq:rank_test}, and then aim to apply \Cref{lemma:fpr} to bound the FPR
(although this is not explicitly proposed in any of the works we survey, to the best of our knowledge).
{We note that using the rank-based test solved the issue of the ``bad'' model above: we might incur a false positive if we are unlucky and sample the counterfactual $x$ on which the model behaves unusually, but our estimate of the false positive \emph{rate} (which is a probability taken over the choice of counterfactual and model training) given by \Cref{lemma:fpr} is correct. See \Cref{apx:bad_model} for more details.}

So if the assumptions of \Cref{lemma:fpr} are met, we could bound the FPR without needing to sample from the null hypothesis. Unfortunately, we show that the setup of existing \dataproofs fails to meet these assumptions.

\paragraph{Issue \#2: Distribution shifts.} 
Non-members collected a posteriori 
{from} the target $x$ {are} unlikely to look like a random sample from $\mathcal{X}$, {as has been previously demonstrated in~\cite{das2024blind}}. 
{We therefore cannot apply \Cref{lemma:fpr} directly, since the distribution shifts mean} that we are not actually bounding the probability in \Cref{eq:fpr_rank} correctly. And indeed, the rank of $x$ could be very low compared to other samples in $\mathcal{X}$, even if the model $f$ was not trained on $x$---simply because the model might distinguish $x$ from other samples in $\mathcal{X}$.

\paragraph{Case study: Time-shifted articles.}
\begin{figure}[t]
\centering
\includegraphics[width=\linewidth]{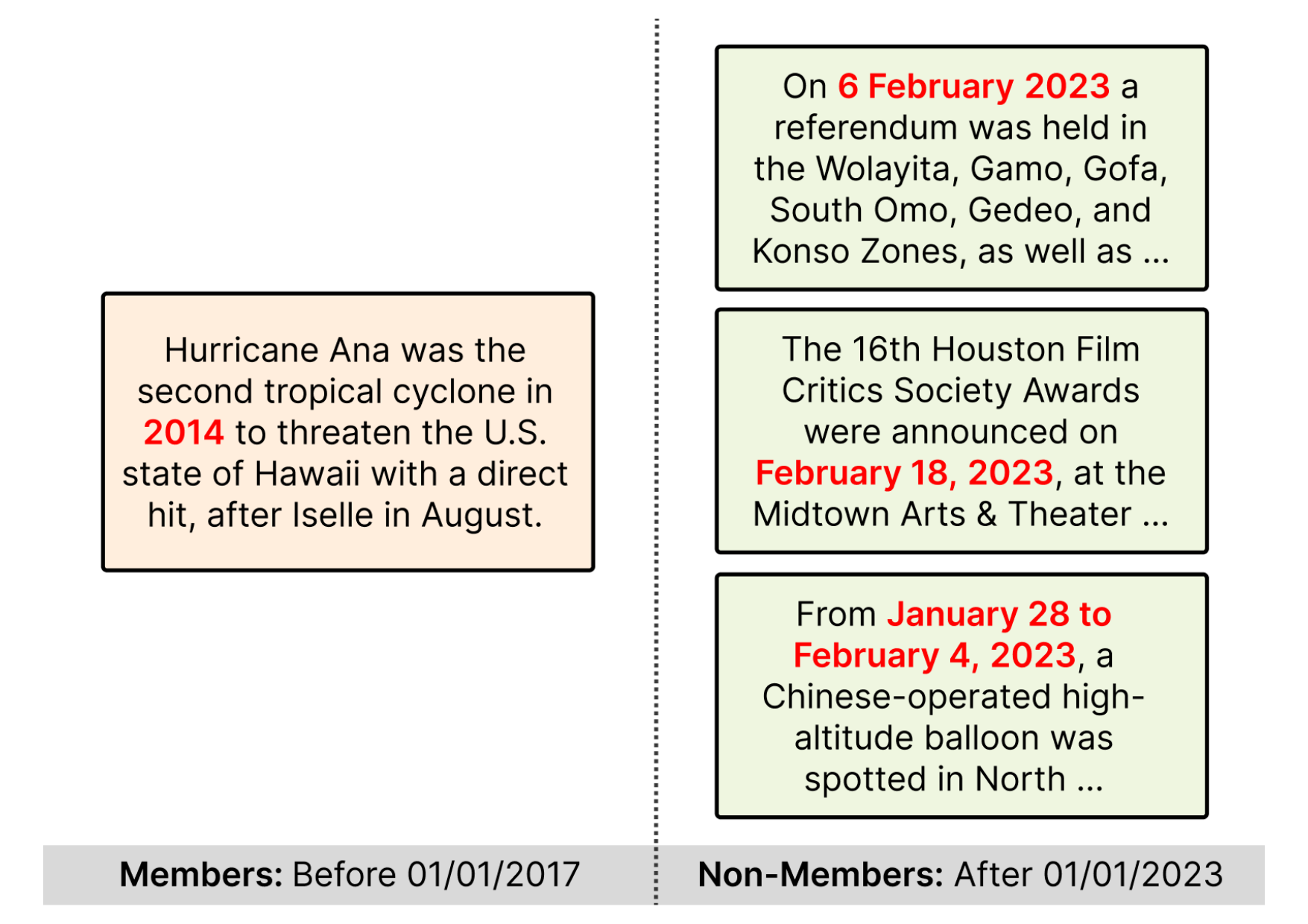}
\caption{If we try to estimate the FPR of a \dataproof by collecting non-members after the model's cutoff date (e.g., as in~\cite{shi2023detecting, meeus2024did}), we estimate the model's behavior on a distribution that differs significantly from the true null hypothesis.}
\label{fig:case_1}
\end{figure}

A common approach in the literature for collecting known non-member data is to take a set 
of samples published after the known \emph{cutoff date} $T$ of the model's training data~\cite{shi2023detecting, meeus2024did, duan2024membership, das2024blind}.
For example, if a news provider wants to show that some article published before time $T$ was used for training a model, they would collect other articles $\mathcal{X}$ published after time $T$ and use these to evaluate the FPR (see~\Cref{fig:case_1}).

Yet, {as shown in~\cite{das2024blind}}, these non-member articles {are} easily distinguishable from the target article $x$ due to distribution shifts. For example, articles in $\mathcal{X}$ might refer to events from after time $T$, which is unlikely to be the case for the target $x$. 
Or maybe the newer articles in $\mathcal{X}$ are on different topics or a slightly different style than $x$. All this means that we cannot directly judge if the model's behavior on $x$ is ``unusual'' just by comparing it to the model's behavior on $\mathcal{X}$.

This issue of distribution shifts in membership inference evaluations has been highlighted in multiple recent works~\cite{duan2024membership, das2024blind, meeus2024inherent, mainidi2024}.
{These works show that many \emph{current} benchmarks for MI attacks suffer from distribution shifts, and that naive collection of non-member data a posteriori is likely to lead to this issue. As a result, for current benchmarks, these works show that one can easily distinguish purported members (i.e., the target article $x$ in our example) from known non-members $\mathcal{X}$ even when $x$ was not used during training. These works also show that current MI attacks are largely ineffective when these distribution shifts are removed from the evaluation data.}

We {push this argument further and claim}  that this issue does not just mean that we have to be careful when evaluating MI attacks in academic settings, {and with current benchmarks and attacks.}
{Rather, we argue} that it is actually impossible to evaluate how well {\emph{any MI}} attack ({including} future, better ones) works in practice, {if we cannot properly sample from the null hypothesis.}

\subsection{Collecting Indistinguishable Non-members} 
\label{ssec:case_studies_2}

To mitigate temporal or other data distribution shifts highlighted above, some works construct a ``de-biased'' set of non-members~\cite{eichler2024nob, meeus2024inherent}.  That is, they ensure that the target $x$ is chosen such that we can (a posteriori)  build a set of non-member samples $\mathcal{X}$ 
so that $x$ looks like a random sample from $\mathcal{X}$.

As an example, Meeus et al.~\cite{meeus2024inherent} suggest to pick the target $x$ and the corresponding known non-members $\mathcal{X}$ from data published respectively right before and right after the target model's training \emph{cutoff date}.
Consider GPT-4, which was trained on
data collected before and up to April 2023 (allegedly). This means we can select possible members and known non-members from right before and after this date.

As we argued in \Cref{ssec:case_studies_1}, this will still not resolve the fundamental issues with trying to approximate the FPR in \Cref{eq:fpr} directly, as we cannot sample from the null hypothesis.
Thus, instead, we consider the rank-based test in \Cref{eq:rank_test} and aim to apply \Cref{lemma:fpr} to bound the FPR.


While debiasing the non-member distribution can address the issue of distribution shifts described above, other issues remain that may invalidate the assumption of \Cref{lemma:fpr}.

\paragraph{Issue \#1: Limited scope for \dataproofs.} 
This approach can only show that the MI attack has low FPR for specific targets $x$.
For instance, in our example above with GPT-4, we could only hope to provide a \dataproof for data published right before April 2023.
This approach does not provide any guarantees for samples further in the past or for data distributions with more complex temporal dynamics.
    
\paragraph{Issue \#2: Uncertainty around known non-members.} 
When selecting non-members
$\mathcal{X}$ to be as similar as possible to the target $x$, it may be hard to ensure that the samples from $\mathcal{X}$ are indeed non-members.
For example, the cutoff date reported by a model developer might not be fully accurate~\cite{cheng2024dated}, as the model might be periodically updated without changing the official cutoff date.

\paragraph{Issue \#3: Low test power.} 
Beyond the issues above that can cause an unsound approximation of the attack's FPR, focusing the attack on a target $x$ collected very close to a model's cutoff date could also make the attack very \emph{weak} (i.e., with low TPR).

First, it is unlikely that \emph{all} the training data was collected right before the cutoff date. Some training data sources might have been collected much further in the past~\cite{cheng2024dated} (e.g., OpenAI might have scraped the New York Times website in 2022 when collecting GPT-4's training data). And so it is possible that we would focus on creating \dataproofs for samples that are not members, even if the training set does contain older samples from the same source.

Second, even if the cutoff date is precise, we should expect that the samples that are most likely to be memorized by a model would have been published on the Web long before the cutoff date (as this makes it more likely for the data to be duplicated across the training set~\cite{carlini2022quantifying}).

\paragraph{Case study: ArXiv articles published right after an LLM's cutoff date.}
\begin{figure}[t]
    \centering
    \includegraphics[width=\linewidth]{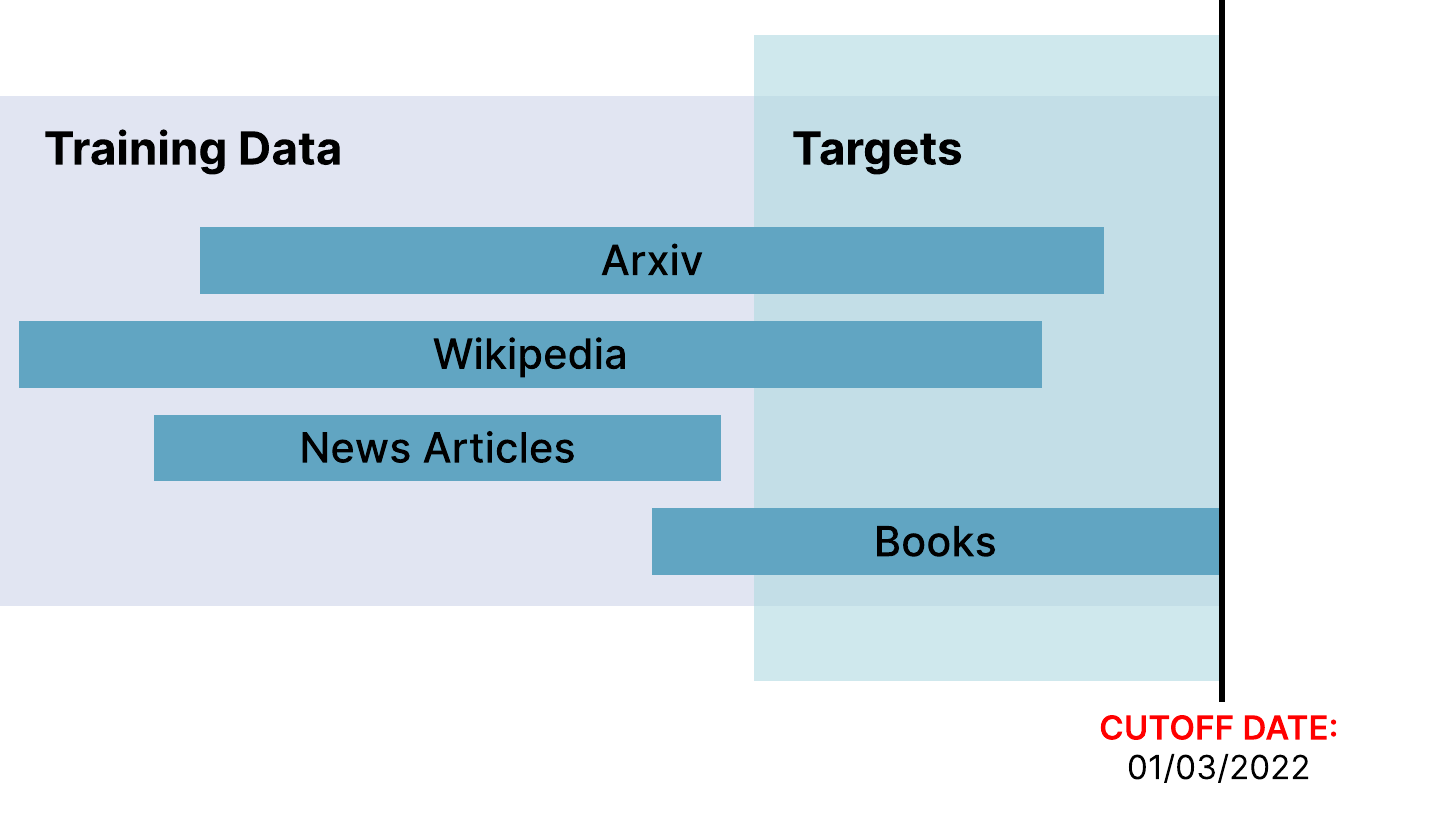}
    \caption{If we collect targets for \dataproofs close to the model's cutoff date (as suggested in~\cite{meeus2024inherent}), we run the risk of focusing our efforts on targets that have no chance of being members, because some data sources could have been collected further in the past.}
    \label{fig:case_2}
\end{figure}
Recent efforts to mitigate distribution shifts involve creating ``non-biased'' and ``non-classifiable'' data post hoc~\cite{eichler2024nob} or collecting non-members within a small time window~\cite{meeus2024inherent}. For instance, Meeus et al.~\cite{meeus2024inherent} sample arXiv papers from one month before the cutoff date of OpenLLaMA as target members and papers released one month after as non-members. They show that simple models fail to distinguish the targets and known non-members, thereby alleviating the concerns about distributions shifts.

However, this approach has significant limitations. First, this approach only works if we want to provide \dataproofs for data sources that are published regularly, and with little distribution shift over time.
But even then, we may get a very weak attack. Consider the OpenLLaMA model used in~\cite{meeus2024inherent}, which has a cutoff date of February 2023. All this tells us is that the \emph{latest} data that was collected was from February 2023. But maybe some of the data sources (e.g., arXiv) could have been collected much earlier than that, as illustrated in~\Cref{fig:case_2}. Then, all the targets considered by the data creator would not be in the training set, even if their earlier data was used. But this earlier data is harder to use for a \dataproof, as known non-members (which are from after the data cutoff date) might then have a significant distribution shift.

Finally, to make matters worse, some model developers like OpenAI acknowledge that both pre-training and post-training datasets may include data from after the ``official'' data cutoff~\cite{achiam2023gpt}. Thus, the boundaries between training and non-training data are blurred, further complicating time-based methods for constructing members and non-members.

Note that researchers can evaluate membership inference attacks using known independent and identically distributed (IID) members and non-members. This can be done by either testing on a fixed model trained with known train/test split~\cite{duan2024membership}, such as the Pythia models trained on Pile~\cite{biderman2023pythia}, or by fine-tuning a fixed model with a known dataset~\cite{duan2023diffusion}. However, this only demonstrates that the MI attack has a low false positive rate for those specific models, datasets, and targets. It does not offer insights into the performance of production models, as it still does not sample from the null hypothesis and therefore cannot serve as sound \dataproofs.

\subsection{Holding Out Counterfactual Data}
\label{ssec:case_studies_3}
The issue with the two previous approaches we considered is that we tried to approximate the attack's FPR by choosing a new distribution of non-members \emph{a posteriori}.
An alternative would be to target data that was explicitly sampled from some distribution $\mathcal{X}$ before being (possibly) included in the training data. 
Then, other samples in $\mathcal{X}$ can be \emph{held-out} to be later used as non-members.

This is the approach taken by Maini et al.~\cite{mainidi2024}. As an example, they suggest that an author who wants to create a \dataproof for some piece of writing $x$ (e.g., a news article from the New York Times) could claim that the data $x$ that was published is a random sample of earlier ``drafts'' $\mathcal{X}$ of that data. 

To give evidence that the data $x$ was used for training, the author would then show that the test statistic on the published sample $x$ is highly unusual compared to the test statistics obtained for \emph{counterfactual} samples $x' \sim \mathcal{X}$ that are not part of the model's training set, but which are a priori equally likely to have been used for training.
For example, the New York Times might show that a trained LLM has much lower loss on a real published article than on earlier drafts of that article that were never published.

This is exactly the rank-test approach we formalize in \Cref{eq:rank_test},
but evaluated a posteriori for a specific sampled target $x$.
We could thus hope to apply \Cref{lemma:fpr} to bound the FPR.


As we will argue later in \Cref{sec:solutions}, this approach can be made sound with a judicious choice of target $x$ and set $\mathcal{X}$. However, in general this approach still suffers from severe issues.

\paragraph{Issue \#1: Non-IID sampling.} 
In general, it is difficult to ensure that the published sample $x$ and the held-out samples $x' \sim \mathcal{X}$ are truly identically distributed. 
In the example above for news article drafts, it is natural to consider that the final version of an article would have a slightly different quality, completeness, and structure
than earlier drafts.
Thus, the model $f$ might rank $x$ higher than alternatives even when not trained on $x$.
    
\paragraph{Issue \#2: Causal effects of publishing.} A much more significant issue that is overlooked by Maini et al.~\cite{mainidi2024} is that the act of publishing a sample $x$ (e.g., on the Web) may influence \emph{other samples} in the model developer's training set $D$.
For example, the final article published by a news provider might get quoted in other websites or inspire other articles. And so, even if this specific article is not in the training data, the model might still be trained on other samples that were influenced by it.

To formalize this, we introduce the concept of a \emph{data repository} $\hat{\mathcal{D}}$, a public repository of possible data samples that a model developer might collect (we can think of $\hat{\mathcal{D}}$ as being the public internet). So when the data creator publishes $x$, they add it to the data repository $\hat{\mathcal{D}}$.
With this in mind, we can write out the FPR from \Cref{eq:fpr_rank} that we want to bound in more detail as follows:

{\footnotesize{
\begin{equation}
    \label{eq:fpr_rank_detail}
    \text{FPR} = 
    \hspace{-3pt}
    \Pr_{x \sim \mathcal{X}}
    \hspace{-3pt}
    \left[ \texttt{rank}(f, x, \mathcal{X}) \leq k
        \ \middle|
        \hspace{-3pt}
        \begin{array}{r}
            
            \text{$C$ publishes $x$ in $\hat{\mathcal{D}}$ at time $t$} \\
            \text{$M$ collects $D \subset \hat{\mathcal{D}}$ at time $T > t$} \\
            D_0 \gets D \setminus \{x\}\\
            f \gets \texttt{Train}(D_0)
        \end{array}
        \hspace{-4pt}
        \right]
        \hspace{-3pt}
\end{equation}}
}

This makes it easier to see that the bound in \Cref{lemma:fpr} no longer applies here, when the act of publishing the sample $x$ causally influences other samples in the data repository $\hat{\mathcal{D}}$ between times $t$ and $T$.
Indeed, the bound in \Cref{lemma:fpr} crucially assumes that $f$ is independent of the choice of $x$, which allows us to average over the randomness of the training run.
But if the model training depends on the choice of target $x$ (even if it is not trained on it explicitly), the FPR calculation becomes much more complex: for each counterfactual $x'$, we would need to simulate the causal effect that $x'$ would have on the rest of $\hat{\mathcal{D}}$---and thus the training set $D$---and retrain a model $f$ accordingly.

\paragraph{Case study: Held-out writing drafts of Harry Potter.}
For sake of argument, let's assume that when writing the first Harry Potter book, author J.K. Rowling sampled the protagonist's name uniformly at random from some list. Thus, a priori, 
``Harry Potter and the Philosopher's Stone'' was equally likely to be published as 
``Leo Osborne and the Philosopher's Stone.''\footnote{According to UK statistics, Leo and Osborne are equally common first names and last names as Harry and Potter, respectively.}
This is an ideal setting to implement the \dataproof proposed by Maini et al.~\cite{mainidi2024}.

Now, suppose that some model $f$ was \emph{not} trained on the Harry Potter books. That is, after collecting a large training set from the Web, the model developers explicitly removed all excerpts from the books (to make the argument stronger, we can even assume the model developers only trained on permissively licensed internet content, such as Wikipedia).
Our null hypothesis is thus true here: the model was \emph{not} trained on Harry Potter.
Yet, we claim that this model is very likely to assign lower loss to the actual published Harry Potter book contents, compared to the counterfactual where all mentions of ``Harry Potter'' are replaced by ``Leo Osborne.''

The reason is simple: there is a wealth of additional data \emph{about} Harry Potter on the Web, even if it is not direct excerpts from the book. For example, there are various Wikipedia pages about Harry Potter, fan content, news pieces, reviews, etc.
Thus, even if a model was not trained on any of the original text of the Harry Potter books, it is still likely to learn about the concept of \emph{Harry Potter} books, rather than, say,
``Leo Osborne books.''

\begin{figure}
    \centering
    \includegraphics[width=\linewidth]{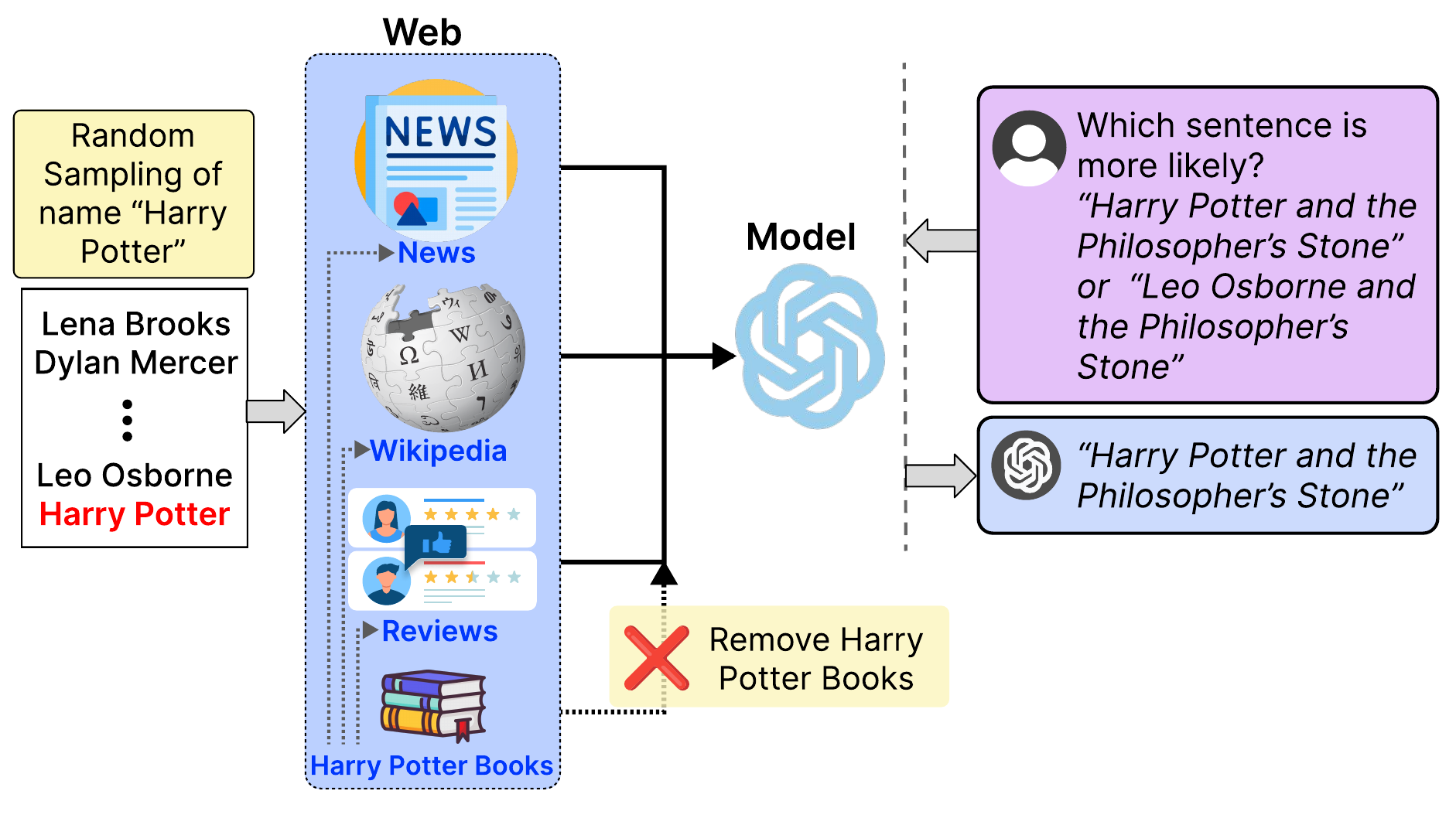} 
    \caption{If we want to perform a \dataproof by comparing the model's behavior on a piece of data to plausible \emph{counterfactuals}~\cite{mainidi2024} (in this fictitious case, possible alternative names that J.K. Rowling could have sampled in lieu of ``Harry Potter''), we need to ensure that the act of \emph{publishing} the data did not causally impact other parts of the model's training data (that are not part of the claimed \dataproof).}
    \label{fig:case_3}
\end{figure}

Thus, when using our rank-based hypothesis test, we would overestimate how ``unlikely'' the model's familiarity with the text of Harry Potter is. Indeed, we are not just measuring whether the actual book contents are known to the model, but whether the entire concept of Harry Potter is more likely to the model than any of the a priori alternatives.

As illustrated in~\Cref{fig:case_3}, the right way to evaluate an MI attack's FPR in this setting would require re-simulating, for each counterfactual $x'$, the entire causal process of publishing $x'$ in lieu of $x$.
Concretely, we would have to find every mention of Harry Potter on the Web and replace it with 
Leo Osborne. We would also have to adapt all popular culture that is derived from Harry Potter
(e.g., its influence on the popularity of the name ``Harry'' for newborns, and so on).
Then, we would need to retrain the model $f$ and evaluate the attack on it.

This example underscores a broader challenge in performing proper membership inference on large foundation models. Since it is hard to fully isolate a specific data point or concept once it has entered the public domain, the notion of ``not being in the training set'' can become blurred~\cite{brown2022does}.
As a concrete example, some authors sued OpenAI for allegedly training ChatGPT on their books, and used as evidence the fact that ChatGPT can write a summary of the book.\footnote{
\url{https://www.theverge.com/2023/7/9/23788741}.
}
Yet, this behavior may be consistent with a model that was never trained on the actual contents of these books, and merely on many other pieces of Web data that refer to the book, e.g., the book's Wikipedia page.
In this sense, presenting a sound argument of membership in the training set is challenging.

\begin{figure}[t]
\begin{tcolorbox}[colback=gray!5!white, colframe=gray!80!black, title= Injecting Random Canaries]
Last night, a meteor shower lit up the skies over Europe, drawing crowds of stargazers. Experts say the meteor shower was the brightest in the past decade, with over 500 visible meteors in just an hour. In other news, tech giant HyperTech unveiled a new smartphone with a holographic display, sparking excitement among consumers. Meanwhile, global markets are reacting to rising tensions between two major economic powers, which could lead to significant fluctuations in the coming weeks. Finally, an ancient shipwreck was discovered off the coast of Greece, believed to date back to the Roman Empire.\\
{\color{blue}$<$div style=``display: none''$>$\textbf{vR8xJ4tG-603d}$<$div$>$}
\end{tcolorbox}
\caption{Injecting a specially crafted \emph{canary} into a news article, e.g., a hidden message in the HTML code.}
\label{tbox:canaries}
\end{figure}

\section{Sound \Dataproofs}
\label{sec:solutions}
We now show that three alternative paradigms can yield convincing and sound \dataproofs: (1) membership inference attacks on specially crafted \emph{canaries};
(2) watermarked training data, and (3) data extraction, where non-trivial portions of the training data are directly recovered from the model.

\subsection{Injecting Random Canaries}
Recall that to apply \Cref{lemma:fpr} to our rank-based hypothesis test, we need to assume that the target data $x$ is sampled from some candidate set $\mathcal{X}$ \emph{independently} of the model $f$. The remainder of the set can then be used as counterfactuals for the hypothesis test. As we argued in \Cref{ssec:case_studies_3}, issues arise if: (1) the sampling is not IID; and (2) if the published data $x$ influences other samples in the training set.

The first part is easy to address, by ensuring that the target data is explicitly sampled uniformly at random from a candidate set. That is, instead of taking some ``real'' data and then a posteriori trying to build a superset that it could have been sampled from, we explicitly sample and inject a piece of ``fake'' data in the data stream we want to test.
This is the notion of a ``data canary''~\cite{carlini2019secret}, which is commonly used for privacy auditing~\cite{JagielskiUO20, NasrSTPC21, steinke2023privacy}.

Addressing the second issue is more challenging. Here, we want to ensure that the injected canary carries no useful information, so that it does not impact other data published on the Web. If the canary string is some random-looking text that is unlikely to be read by anyone other than a data crawler, this condition is likely to be met. An example is illustrated in~\Cref{tbox:canaries}.

If these assumptions are met, then \Cref{lemma:fpr} applies and the FPR in \Cref{eq:fpr_rank} is simply equal to $k / |\mathcal{X}|$. So if we want to give a \dataproof with an FPR bounded by 1\%, we have to show some test statistic that ranks the true canary $x$ in the $1\%$-percentile of all possible canary strings $\mathcal{X}$.

The use of random canaries for \dataproofs was also proposed under the notion of ``copyright traps'' by Meeus et al.~\cite{meeus2024copyright}, although they do not consider a rank-based test and do not show how to bound the FPR of this approach.

\begin{figure}[t]
\begin{tcolorbox}[colback=gray!5!white, colframe=gray!80!black, title=Watermarking]

\textbf{Watermarked Data:} \\
{The sun was setting over the quiet village, casting long shadows across the cobblestone streets. 
Maria stood at the edge of the square, watching as the last market stalls were taken down. 
She knew she }{would have to speak to him soon, but fear gripped her heart. 
Just then, \ldots a figure appeared in the distance, moving steadily towards her. 
It was John, his face unreadable, but his eyes locked on hers.}

\vspace{0.5cm}
\hrule
\vspace{0.3cm}

\textbf{LLM Prompt:} \\
Complete the following text. \\
{``The sun was setting over the quiet village, casting long shadows across the cobblestone streets.
Maria stood at the edge of the square, watching as the last market stalls were taken down. She knew she would''}

\vspace{0.5cm}
\hrule
\vspace{0.3cm}

\textbf{LLM Output with Watermarks:} \\
\colorbox{green!30}{\strut have}%
\colorbox{red!30}{\strut to}%
\colorbox{green!30}{\strut speak}%
\colorbox{green!30}{\strut to}%
\colorbox{red!30}{\strut him}%
\colorbox{green!30}{\strut soon,}%
\colorbox{green!30}{\strut but}%
\colorbox{red!30}{\strut fear gripped}\\
\colorbox{green!30}{\strut  her heart.  Just then, \ldots a figure}%
\colorbox{red!30}{\strut appeared}%
\colorbox{green!30}{\strut in the}\\
\colorbox{green!30}{\strut distance,}%
\colorbox{green!30}{\strut moving}%
\colorbox{green!30}{\strut steadily}%
\colorbox{green!30}{\strut towards}%
\colorbox{green!30}{\strut her.}%
\colorbox{green!30}{\strut It}%
\colorbox{green!30}{\strut was John,}\\
\colorbox{green!30}{\strut his} 
\colorbox{red!30}{\strut face}%
\colorbox{green!30}{\strut unreadable, but}%
\colorbox{red!30}{\strut his}%
\colorbox{green!30}{\strut eyes}%
\colorbox{green!30}{\strut locked}%
\colorbox{green!30}{\strut on}%
\colorbox{red!30}{\strut hers.}
\end{tcolorbox}
\caption{A common method for identifying a watermark in the output of a language model is to count the ``green'' and ``red'' tokens and perform a statistical test that yields interpretable p-values~\cite{kirchenbauer2023watermark}.}
\label{tbox:watermark}
\end{figure}

\subsection{Watermarking}
Rather than relying on the formulation in~\Cref{eq:fpr} and \Cref{lemma:fpr} to bound the FPR, watermarked training data offers an alternative, statistically sound method to bound the FPR, as discussed in~\cite{kirchenbauer2023watermark}.
In this approach, the data creator embeds an imperceptible signal into the training data before its public release. This is done by using a randomly sampled seed to partition the vocabulary into a ``green list'' and a ``red list,'' subtly promoting the use of green tokens during sampling.
Essentially, we can view watermarked text as a way of embedding a random canary string---the watermark key---into a stream of text.

When a model is trained on this watermarked data, the signal leaves subtle but detectable traces in the model's learned patterns. Researchers can then formulate a hypothesis test to determine if the model was exposed to the watermarked data during training~\cite{sander2025watermarking,zhong2024watermarking}, and it can be detected with high confidence (low FPR) even when as little as 5\% of the training text is watermarked. 

As shown in~\Cref{tbox:watermark}, when we examine the generated text, we count the number of ``red'' and ``green'' tokens it contains. If a human had written it, we would expect to see around 9 green tokens. However, the text actually has 28 green tokens. The likelihood of this occurring by chance is incredibly small---$7\times10^{-14}$~\cite{kirchenbauer2023watermark}. This extremely low probability strongly indicates that the text was the watermarked data, which can serve as a sound \dataproof.

\subsection{Verbatim Data Extraction}
In existing lawsuits that have claimed non-permissive data usage for training, the accusing parties typically provide stronger evidence than mere membership inference, namely that they can \emph{extract} a portion of their data from a trained (generative) model~\cite{nyt_v_OpenAI, GettyImages_v_StabilityAI, wei2024evaluating}.
That is, they show the existence of some input (or prompt) $p$ within some set $\mathcal{P}$ such that $f(p) = x$ (or, more generally, an output that is very close to $x$ for some suitable metric).

Note that this is just another membership inference test: here, the test statistic $T(f, x)$ is of the form:
\[
    \mathbbm{1}\{\exists p \in \mathcal{P}: f(p) = x\} \;,
\]
i.e., a test of whether we can find a prompt that makes the model generate the target output.

What do we need for this hypothesis test to have low FPR?
Existing works on data extraction for LLMs provide some heuristics~\cite{carlini2021extracting, nasr2023scalable}, such as: (1) the output string $x$ should be ``long'' (e.g., 50 tokens, or 2-3 sentences); (2) $x$ should have ``high entropy''; (3) the prompts $p \in \mathcal{P}$ should be ``short'' or ``random looking''. Intuitively, these heuristics are meant to guard against the model $f$ emitting $x$ by chance alone, or by copying it from a prompt $p$.

\begin{figure}[t]
\begin{tcolorbox}[colback=gray!5!white, colframe=gray!80!black, title=Data Extraction From LLMs]

\textbf{Private Data:} \\
{\color{mygreen}The sun was setting over the quiet village, casting long shadows across the cobblestone streets. 
Maria stood at the edge of the square, watching as the last market stalls were taken down. 
She knew she }{\color{blue} would have to speak to him soon, but fear gripped her heart. 
Just then, a figure appeared in the distance, moving steadily towards her. 
It was John, his face unreadable, but his eyes locked on hers.}

\vspace{0.5cm}
\hrule
\vspace{0.3cm}

\textbf{LLM Prompt:} \\
Complete the following text. \\
{\color{mygreen}``The sun was setting over the quiet village, casting long shadows across the cobblestone streets.
Maria stood at the edge of the square, watching as the last market stalls were taken down. She knew she''}

\vspace{0.5cm}
\hrule
\vspace{0.3cm}

\textbf{LLM Output:} \\
{\color{blue}would have to speak to him soon, but fear gripped her heart. 
Just then, a figure appeared in the distance, moving steadily towards her. 
It was John, his face unreadable, but his eyes locked on hers.}
\end{tcolorbox}
\caption{Illustration of data extraction from a language model: Given a private text passage and a corresponding prompt, the model generates a continuation, highlighting the potential for verbatim data extraction.}
\label{tbox:extract}
\end{figure}

We now propose a way to (partially) formalize these intuitions by casting data extraction as a ranking hypothesis test as in \Cref{eq:rank_test}.
Given a set of prompts $\mathcal{P}$, define $\mathcal{U}$ as the \emph{universe of plausible outputs} (of some bounded length $l$) on prompts $p \in \mathcal{P}$. That is, $\mathcal{U}$ contains any string of length up to $l$ (presumably in addition to $x$) that would be a plausible continuation of some prompt $p \in \mathcal{P}$. We leave a more precise definition of $\mathcal{U}$ out of scope and instead provide a few examples for specific prompts:

\begin{itemize}
    \item If \emph{\color{mygreen}$p = $``1 2 3 4''}, then the outcome \emph{\color{blue}``5 6 7 8...''} is particularly likely, and no other strings should be expected to be as likely.
    \item If \emph{\color{mygreen}$p = $``repeat: 5cghdbnk4e7''}, then \emph{\color{blue}``5cghdbnk4e7''} is a particularly likely continuation, and no other strings should be expected to be very likely.
    \item If \emph{\color{mygreen}$p = $``WASHINGTON — American intelligence officials have concluded that a Russian military intelligence unit secretly offered bounties to Taliban-linked militants for killing coalition''}, then 
    the continuation \emph{\color{blue}``forces in Afghanistan — including targeting American troops — amid the peace talks to end the long running war there, according to officials briefed on the matter''} is certainly one plausible continuation.\footnote{This is the actual continuation of this prompt in a New York Times article memorized by ChatGPT~\cite{nyt_v_OpenAI}.} But one could imagine an uncountable number of alternatives. So here the universe $\mathcal{U}$ is presumably very large.
\end{itemize}

Why does this universe $\mathcal{U}$ matter? Our claim is that by testing whether $f(p) = x$ for some $p \in \mathcal{P}$, we are implicitly conducting the following counterfactual test:
\begin{equation}
    \exists p \in \mathcal{P} \text{ s.t. } \texttt{rank}(f, z, \mathcal{U}, x) = 1 \;,
\end{equation}
that is, we test whether the string $x$ is \emph{the most likely} output\footnote{If the sampling is randomized, we can generalize this to say that $x$ is a likely output of the model $f$.} of $f$ among all possible outputs in $\mathcal{U}$, for at least one prompt $p$.
%

Now, if we make the (simplifying) assumption that all strings in $\mathcal{U}$ are a priori approximately equally likely given a prompt $p$, then we can bound the FPR as:
\begin{equation}
    \text{FPR} \lessapprox |\mathcal{P}| / |\mathcal{U}| \;.
\end{equation}
That is, as long as we do not test too many different prompts (or too long ones), and the number of plausible outputs for these prompts is a priori high, then the likelihood that a model that was not trained on the target $x$ would output $x$ is very small. We show an example in~\Cref{tbox:extract}.

This formalism clarifies where existing heuristics for selecting targets and prompts for data extraction come from. By choosing short prompts, we minimize the size of $\mathcal{P}$; by choosing random prompts (or prompts that appear unrelated to $x$), we maximize the size of $\mathcal{U}$, i.e., the set of outputs that could plausibly follow the prompt; and by ensuring that the target string $x$ is long and of high enough entropy, we also ensure that many alternative plausible outputs should exist, thereby also maximizing the size of $\mathcal{U}$.

Now of course in practice we cannot easily characterize the exact sizes of $\mathcal{P}$ and $\mathcal{U}$.
The prevailing wisdom behind existing heuristics is that they result in a set $\mathcal{U}$ that is substantially larger than $\mathcal{P}$, and so the FPR of this test is essentially zero for all practical purposes. And indeed, existing works on data extraction attacks that followed these heuristics report no evidence of any false positives~\cite{carlini2021extracting, nasr2023scalable}.

\section{When is Membership Inference Sound?}
While we argue that membership inference attacks cannot provide evidence for a \dataproof in large-scale, production ML models, this does not mean that 
membership inference attacks are not useful and sound in other settings.
We review these here.

\paragraph{Empirical evaluations for privacy defenses.} 
In academic settings, membership inference attacks are typically used as privacy evaluation tools to assess privacy leakage in specific defenses~\cite{aerni2024evaluations, carlini2022free}, rather than as proofs for the training data itself. In these scenarios, the entire data generation process is transparent and reproducible, allowing researchers to know precisely how the model was trained. This transparency makes it feasible to replicate model training both with and without specific data points. As a result, researchers can directly sample from the null hypothesis (i.e., models trained without the target data), enabling accurate estimation of the attack's false positive rate, as defined in~\Cref{eq:fpr}.

\paragraph{Auditing and lower bounding differential privacy.}
In a similar vein, MI attacks can be used to audit implementations of differentially private machine learning~\cite{JagielskiUO20,NasrSTPC21, steinke2023privacy},
by drawing on a hypothesis testing interpretation of differential privacy~\cite{WassermanZ10,KairouzOV15}.
Here also, the assumption is that the auditor controls the data generation process, and can thus subsample training data to obtain sound FPR estimations.

\paragraph{A subcomponent of a data extraction attack.}
In a number of works on data extraction attacks~\cite{carlini2021extracting, carlini2023extracting}, an MI attack is applied at a filtering stage.
More precisely, these attacks first generate a very large number of 
candidate samples with a generative model, and then filter this set to a smaller size that can be checked for memorization with costly means (e.g., human verification). Here, we do not need to know precisely what the MI attack's FPR is. As long as the FPR is reasonably low, the costly verification work is minimized.

\section{Conclusion}
Our analysis highlights the fundamental limitations of using membership inference attacks as a reliable method for creating \dataproofs, due to the inability to  demonstrate that the attack has a low false positive rate (i.e., the chance of wrongly accusing a model developer). 
This issue is due to the difficulty in correctly sampling from the \emph{null hypothesis} (i.e., the model was \emph{not} trained on the target data), given the opaque and large-scale nature of modern foundation models.
Despite attempts to address these issues using debiased or counterfactual datasets, none of the proposed approaches we survey are statistically sound.

However, we show that three folklore approaches for generating \dataproofs
are sound under mild assumptions.
The first approach injects random canaries into potential training data, and then demonstrates that the model's behavior on the chosen canary differs significantly from the behavior on other canaries that were a priori equally likely.  
The second approach involves watermarking the training data, offering a statistically sound method by employing a statistical test that yields interpretable p-values.
The third is data extraction attacks, which involve retrieving identifiable portions of the training data from the model itself. This method avoids the complexities of direct counterfactual estimation and provides compelling evidence when large sections of the data can be extracted.
Both methods offer more robust and reliable means for demonstrating that a model was trained on specific data, and thus pave the way for stronger, more convincing \dataproofs.

\section*{Acknowledgment}
We thank all anonymous reviewers for their valuable comments and suggestions. J.Z. is funded by the Swiss National Science Foundation (SNSF) project grant 214838.
\balance

\newpage
\bibliography{reference}

\begin{thebibliography}{45}
\providecommand{\natexlab}[1]{#1}
\providecommand{\url}[1]{\texttt{#1}}
\expandafter\ifx\csname urlstyle\endcsname\relax
  \providecommand{\doi}[1]{doi: #1}\else
  \providecommand{\doi}{doi: \begingroup \urlstyle{rm}\Url}\fi

\bibitem[Achiam et~al.(2023)Achiam, Adler, Agarwal, Ahmad, Akkaya, Aleman, Almeida, Altenschmidt, Altman, Anadkat, et~al.]{achiam2023gpt}
Josh Achiam, Steven Adler, Sandhini Agarwal, Lama Ahmad, Ilge Akkaya, Florencia~Leoni Aleman, Diogo Almeida, Janko Altenschmidt, Sam Altman, Shyamal Anadkat, et~al.
\newblock {GPT-4} technical report.
\newblock \emph{arXiv preprint arXiv:2303.08774}, 2023.

\bibitem[Aerni et~al.(2024)Aerni, Zhang, and Tramèr]{aerni2024evaluations}
Michael Aerni, Jie Zhang, and Florian Tramèr.
\newblock Evaluations of machine learning privacy defenses are misleading.
\newblock In \emph{ACM Conference on Computer and Communications Security (CCS)}, 2024.

\bibitem[Biderman et~al.(2023)Biderman, Schoelkopf, Anthony, Bradley, O’Brien, Hallahan, Khan, Purohit, Prashanth, Raff, et~al.]{biderman2023pythia}
Stella Biderman, Hailey Schoelkopf, Quentin~Gregory Anthony, Herbie Bradley, Kyle O’Brien, Eric Hallahan, Mohammad~Aflah Khan, Shivanshu Purohit, USVSN~Sai Prashanth, Edward Raff, et~al.
\newblock Pythia: A suite for analyzing large language models across training and scaling.
\newblock In \emph{International Conference on Machine Learning}, pages 2397--2430. PMLR, 2023.

\bibitem[Brown et~al.(2022)Brown, Lee, Mireshghallah, Shokri, and Tram{\`e}r]{brown2022does}
Hannah Brown, Katherine Lee, Fatemehsadat Mireshghallah, Reza Shokri, and Florian Tram{\`e}r.
\newblock What does it mean for a language model to preserve privacy?
\newblock In \emph{Proceedings of the 2022 ACM conference on fairness, accountability, and transparency}, pages 2280--2292, 2022.

\bibitem[Carlini et~al.(2019)Carlini, Liu, Erlingsson, Kos, and Song]{carlini2019secret}
Nicholas Carlini, Chang Liu, {\'U}lfar Erlingsson, Jernej Kos, and Dawn Song.
\newblock The secret sharer: Evaluating and testing unintended memorization in neural networks.
\newblock In \emph{28th USENIX security symposium (USENIX security 19)}, pages 267--284, 2019.

\bibitem[Carlini et~al.(2021)Carlini, Tram{\`e}r, Wallace, Jagielski, Herbert-Voss, Lee, Roberts, Brown, Song, Erlingsson, Oprea, and Raffel]{carlini2021extracting}
Nicholas Carlini, Florian Tram{\`e}r, Eric Wallace, Matthew Jagielski, Ariel Herbert-Voss, Katherine Lee, Adam Roberts, Tom Brown, Dawn Song, {\'U}lfar Erlingsson, Alina Oprea, and Colin Raffel.
\newblock Extracting training data from large language models.
\newblock In \emph{30th USENIX Security Symposium (USENIX Security 21)}, pages 2633--2650, August 2021.
\newblock ISBN 978-1-939133-24-3.

\bibitem[Carlini et~al.(2022{\natexlab{a}})Carlini, Chien, Nasr, Song, Terzis, and Tramer]{carlini2022membership}
Nicholas Carlini, Steve Chien, Milad Nasr, Shuang Song, Andreas Terzis, and Florian Tramer.
\newblock Membership inference attacks from first principles.
\newblock In \emph{2022 IEEE Symposium on Security and Privacy (SP)}, pages 1897--1914. IEEE, 2022{\natexlab{a}}.

\bibitem[Carlini et~al.(2022{\natexlab{b}})Carlini, Feldman, and Nasr]{carlini2022free}
Nicholas Carlini, Vitaly Feldman, and Milad Nasr.
\newblock No free lunch in "{Privacy for Free: How does Dataset Condensation Help Privacy}", 2022{\natexlab{b}}.

\bibitem[Carlini et~al.(2023{\natexlab{a}})Carlini, Hayes, Nasr, Jagielski, Sehwag, Tram\`{e}r, Balle, Ippolito, and Wallace]{carlini2023extracting}
Nicholas Carlini, Jamie Hayes, Milad Nasr, Matthew Jagielski, Vikash Sehwag, Florian Tram\`{e}r, Borja Balle, Daphne Ippolito, and Eric Wallace.
\newblock Extracting training data from diffusion models.
\newblock In \emph{Proceedings of the 32nd USENIX Conference on Security Symposium}, SEC '23, USA, 2023{\natexlab{a}}. USENIX Association.
\newblock ISBN 978-1-939133-37-3.

\bibitem[Carlini et~al.(2023{\natexlab{b}})Carlini, Ippolito, Jagielski, Lee, Tramer, and Zhang]{carlini2022quantifying}
Nicholas Carlini, Daphne Ippolito, Matthew Jagielski, Katherine Lee, Florian Tramer, and Chiyuan Zhang.
\newblock Quantifying memorization across neural language models.
\newblock In \emph{The Eleventh International Conference on Learning Representations}, 2023{\natexlab{b}}.

\bibitem[{Cerys Wyn Davies and Gill Dennis}(2024)]{GettyImages_v_StabilityAI}
{Cerys Wyn Davies and Gill Dennis}.
\newblock {Getty Images v Stability AI}: the implications for {UK} copyright law and licensing.
\newblock \url{https://www.pinsentmasons.com/out-law/analysis/getty-images-v-stability-ai-implications-copyright-law-licensing}, 2024.
\newblock Accessed: 2024-09-12.

\bibitem[Cheng et~al.(2024)Cheng, Marone, Weller, Lawrie, Khashabi, and Durme]{cheng2024dated}
Jeffrey Cheng, Marc Marone, Orion Weller, Dawn Lawrie, Daniel Khashabi, and Benjamin~Van Durme.
\newblock Dated data: Tracing knowledge cutoffs in large language models.
\newblock In \emph{First Conference on Language Modeling}, 2024.

\bibitem[Das et~al.(2024)Das, Zhang, and Tram{\`e}r]{das2024blind}
Debeshee Das, Jie Zhang, and Florian Tram{\`e}r.
\newblock Blind baselines beat membership inference attacks for foundation models.
\newblock \emph{arXiv preprint arXiv:2406.16201}, 2024.

\bibitem[Duan et~al.(2023)Duan, Kong, Wang, Shi, and Xu]{duan2023diffusion}
Jinhao Duan, Fei Kong, Shiqi Wang, Xiaoshuang Shi, and Kaidi Xu.
\newblock Are diffusion models vulnerable to membership inference attacks?
\newblock In \emph{International Conference on Machine Learning}, pages 8717--8730. PMLR, 2023.

\bibitem[Duan et~al.(2024)Duan, Suri, Mireshghallah, Min, Shi, Zettlemoyer, Tsvetkov, Choi, Evans, and Hajishirzi]{duan2024membership}
Michael Duan, Anshuman Suri, Niloofar Mireshghallah, Sewon Min, Weijia Shi, Luke Zettlemoyer, Yulia Tsvetkov, Yejin Choi, David Evans, and Hannaneh Hajishirzi.
\newblock Do membership inference attacks work on large language models?
\newblock In \emph{First Conference on Language Modeling (COLM)}, 2024.

\bibitem[Duarte et~al.(2024)Duarte, Zhao, Oliveira, and Li]{duarte2024cop}
Andr{\'e}~Vicente Duarte, Xuandong Zhao, Arlindo~L Oliveira, and Lei Li.
\newblock {DE-COP}: Detecting copyrighted content in language models training data.
\newblock In \emph{Forty-first International Conference on Machine Learning}, 2024.

\bibitem[Eichler et~al.(2024)Eichler, Champeil, Anciaux, Bensamoun, Arcolezi, and De~Fuentes]{eichler2024nob}
C{\'e}dric Eichler, Nathan Champeil, Nicolas Anciaux, Alexandra Bensamoun, H{\'e}ber~Hwang Arcolezi, and Jos{\'e}~Maria De~Fuentes.
\newblock Nob-{MIAs}: Non-biased membership inference attacks assessment on large language models with ex-post dataset construction.
\newblock \emph{arXiv preprint arXiv:2408.05968}, 2024.

\bibitem[Gloaguen et~al.(2024)Gloaguen, Jovanovi{\'c}, Staab, and Vechev]{gloaguen2024discovering}
Thibaud Gloaguen, Nikola Jovanovi{\'c}, Robin Staab, and Martin Vechev.
\newblock Discovering clues of spoofed lm watermarks.
\newblock \emph{arXiv preprint arXiv:2410.02693}, 2024.

\bibitem[Homer et~al.(2008)Homer, Szelinger, Redman, Duggan, Tembe, Muehling, Pearson, Stephan, Nelson, and Craig]{HomerSRDTMPSNC08}
Nils Homer, Szabolcs Szelinger, Margot Redman, David Duggan, Waibhav Tembe, Jill Muehling, John~V. Pearson, Dietrich~A. Stephan, Stanley~F. Nelson, and David~W. Craig.
\newblock Resolving individuals contributing trace amounts of {DNA} to highly complex mixtures using high-density {SNP} genotyping microarrays.
\newblock \emph{PLoS Genetics}, 4\penalty0 (8):\penalty0 1--9, 2008.

\bibitem[Jagielski et~al.(2020)Jagielski, Ullman, and Oprea]{JagielskiUO20}
Matthew Jagielski, Jonathan Ullman, and Alina Oprea.
\newblock Auditing differentially private machine learning: How private is private {SGD}?
\newblock In \emph{Advances in Neural Information Processing Systems 33}, NeurIPS '20. Curran Associates, Inc., 2020.

\bibitem[Kairouz et~al.(2015)Kairouz, Oh, and Viswanath]{KairouzOV15}
Peter Kairouz, Sewoong Oh, and Pramod Viswanath.
\newblock The composition theorem for differential privacy.
\newblock In \emph{Proceedings of the 32nd International Conference on Machine Learning}, ICML '15, pages 1376--1385. JMLR, Inc., 2015.

\bibitem[Kirchenbauer et~al.(2023)Kirchenbauer, Geiping, Wen, Katz, Miers, and Goldstein]{kirchenbauer2023watermark}
John Kirchenbauer, Jonas Geiping, Yuxin Wen, Jonathan Katz, Ian Miers, and Tom Goldstein.
\newblock A watermark for large language models.
\newblock In \emph{International Conference on Machine Learning}, pages 17061--17084. PMLR, 2023.

\bibitem[Krizhevsky(2009)]{Krizhevsky09}
Alex Krizhevsky.
\newblock Learning multiple layers of features from tiny images, 2009.

\bibitem[Maini et~al.(2021)Maini, Yaghini, and Papernot]{maini2021dataset}
Pratyush Maini, Mohammad Yaghini, and Nicolas Papernot.
\newblock Dataset inference: Ownership resolution in machine learning.
\newblock In \emph{International Conference on Learning Representations}, 2021.
\newblock URL \url{https://openreview.net/forum?id=hvdKKV2yt7T}.

\bibitem[Maini et~al.(2024)Maini, Jia, Papernot, and Dziedzic]{mainidi2024}
Pratyush Maini, Hengrui Jia, Nicolas Papernot, and Adam Dziedzic.
\newblock {LLM} dataset inference: Did you train on my dataset?, 2024.

\bibitem[Meeus et~al.(2024{\natexlab{a}})Meeus, Jain, Rei, and de~Montjoye]{meeus2024inherent}
Matthieu Meeus, S~hubham Jain, Marek Rei, and Yves-Alexandre de~Montjoye.
\newblock Inherent challenges of post-hoc membership inference for large language models.
\newblock \emph{arXiv preprint arXiv:2406.17975}, 2024{\natexlab{a}}.

\bibitem[Meeus et~al.(2024{\natexlab{b}})Meeus, Jain, Rei, and de~Montjoye]{meeus2024did}
Matthieu Meeus, Shubham Jain, Marek Rei, and Yves-Alexandre de~Montjoye.
\newblock Did the neurons read your book? {D}ocument-level membership inference for large language models.
\newblock In \emph{33rd USENIX Security Symposium (USENIX Security 24)}, pages 2369--2385, 2024{\natexlab{b}}.

\bibitem[Meeus et~al.(2024{\natexlab{c}})Meeus, Shilov, Faysse, and de~Montjoye]{meeus2024copyright}
Matthieu Meeus, Igor Shilov, Manuel Faysse, and Yves-Alexandre de~Montjoye.
\newblock Copyright traps for large language models.
\newblock In \emph{Forty-first International Conference on Machine Learning}, 2024{\natexlab{c}}.

\bibitem[Nasr et~al.(2021)Nasr, Songi, Thakurta, Papernot, and Carlin]{NasrSTPC21}
Milad Nasr, Shuang Songi, Abhradeep Thakurta, Nicolas Papernot, and Nicholas Carlin.
\newblock Adversary instantiation: Lower bounds for differentially private machine learning.
\newblock In \emph{2021 IEEE Symposium on security and privacy (SP)}, pages 866--882. IEEE, 2021.

\bibitem[Nasr et~al.(2023)Nasr, Carlini, Hayase, Jagielski, Cooper, Ippolito, Choquette-Choo, Wallace, Tram{\`e}r, and Lee]{nasr2023scalable}
Milad Nasr, Nicholas Carlini, Jonathan Hayase, Matthew Jagielski, A~Feder Cooper, Daphne Ippolito, Christopher~A Choquette-Choo, Eric Wallace, Florian Tram{\`e}r, and Katherine Lee.
\newblock Scalable extraction of training data from (production) language models.
\newblock \emph{arXiv preprint arXiv:2311.17035}, 2023.

\bibitem[Neyman and Pearson(1933)]{neyman1933ix}
Jerzy Neyman and Egon~Sharpe Pearson.
\newblock Ix. on the problem of the most efficient tests of statistical hypotheses.
\newblock \emph{Philosophical Transactions of the Royal Society of London. Series A, Containing Papers of a Mathematical or Physical Character}, 231\penalty0 (694-706):\penalty0 289--337, 1933.

\bibitem[{OpenAI}(2024)]{openai_dalle3}
{OpenAI}.
\newblock {DALL·E 3 by OpenAI}.
\newblock \url{https://openai.com/index/dall-e-3/}, 2024.
\newblock Accessed: 2024-09-06.

\bibitem[Pope(2024)]{nyt_v_OpenAI}
Audrey Pope.
\newblock {NYT} v. {OpenAI}: The {Times's} about-face.
\newblock \url{https://harvardlawreview.org/blog/2024/04/nyt-v-openai-the-timess-about-face/}, 2024.
\newblock Accessed: 2024-09-12.

\bibitem[Sander et~al.(2025)Sander, Fernandez, Durmus, Douze, and Furon]{sander2025watermarking}
Tom Sander, Pierre Fernandez, Alain Durmus, Matthijs Douze, and Teddy Furon.
\newblock Watermarking makes language models radioactive.
\newblock \emph{Advances in Neural Information Processing Systems}, 37:\penalty0 21079--21113, 2025.

\bibitem[Shi et~al.(2024)Shi, Ajith, Xia, Huang, Liu, Blevins, Chen, and Zettlemoyer]{shi2023detecting}
Weijia Shi, Anirudh Ajith, Mengzhou Xia, Yangsibo Huang, Daogao Liu, Terra Blevins, Danqi Chen, and Luke Zettlemoyer.
\newblock Detecting pretraining data from large language models.
\newblock In \emph{The Twelfth International Conference on Learning Representations}, 2024.

\bibitem[Shokri et~al.(2017)Shokri, Stronati, Song, and Shmatikov]{shokri2017membership}
Reza Shokri, Marco Stronati, Congzheng Song, and Vitaly Shmatikov.
\newblock Membership inference attacks against machine learning models.
\newblock In \emph{2017 IEEE symposium on security and privacy (SP)}, pages 3--18. IEEE, 2017.

\bibitem[Steinke et~al.(2023)Steinke, Nasr, and Jagielski]{steinke2023privacy}
Thomas Steinke, Milad Nasr, and Matthew Jagielski.
\newblock Privacy auditing with one (1) training run.
\newblock In \emph{Thirty-seventh Conference on Neural Information Processing Systems}, 2023.

\bibitem[Touvron et~al.(2023)Touvron, Lavril, Izacard, Martinet, Lachaux, Lacroix, Rozi{\`e}re, Goyal, Hambro, Azhar, et~al.]{touvron2023llama}
Hugo Touvron, Thibaut Lavril, Gautier Izacard, Xavier Martinet, Marie-Anne Lachaux, Timoth{\'e}e Lacroix, Baptiste Rozi{\`e}re, Naman Goyal, Eric Hambro, Faisal Azhar, et~al.
\newblock Llama: Open and efficient foundation language models.
\newblock \emph{arXiv preprint arXiv:2302.13971}, 2023.

\bibitem[{United States District Court, Northern District of California}(2024)]{doe_v_github_2024}
{United States District Court, Northern District of California}.
\newblock {J. Doe 1}, et al. v. {GitHub, Inc}., et al., Apr 2024.
\newblock 22-cv-06823-JST (N.D. Cal. Apr. 15, 2024).

\bibitem[Wasserman and Zhou(2010)]{WassermanZ10}
Larry Wasserman and Shuheng Zhou.
\newblock A statistical framework for differential privacy.
\newblock \emph{Journal of the American Statistical Association}, 105\penalty0 (489):\penalty0 375--389, 2010.

\bibitem[Wei et~al.(2024)Wei, Shi, Huang, Smith, Zhang, Zettlemoyer, Li, and Henderson]{wei2024evaluating}
Boyi Wei, Weijia Shi, Yangsibo Huang, Noah~A Smith, Chiyuan Zhang, Luke Zettlemoyer, Kai Li, and Peter Henderson.
\newblock Evaluating copyright takedown methods for language models.
\newblock In \emph{The Thirty-eight Conference on Neural Information Processing Systems Datasets and Benchmarks Track}, 2024.

\bibitem[Ye et~al.(2022)Ye, Maddi, Murakonda, Bindschaedler, and Shokri]{ye2022enhanced}
Jiayuan Ye, Aadyaa Maddi, Sasi~Kumar Murakonda, Vincent Bindschaedler, and Reza Shokri.
\newblock Enhanced membership inference attacks against machine learning models.
\newblock In \emph{Proceedings of the 2022 ACM SIGSAC Conference on Computer and Communications Security}, pages 3093--3106, 2022.

\bibitem[Yeom et~al.(2018)Yeom, Giacomelli, Fredrikson, and Jha]{yeom2018privacy}
Samuel Yeom, Irene Giacomelli, Matt Fredrikson, and Somesh Jha.
\newblock Privacy risk in machine learning: Analyzing the connection to overfitting.
\newblock In \emph{2018 IEEE 31st computer security foundations symposium (CSF)}, pages 268--282. IEEE, 2018.

\bibitem[Zhang et~al.(2024)Zhang, Sun, Yeats, Ouyang, Kuo, Zhang, Yang, and Li]{zhang2024min}
Jingyang Zhang, Jingwei Sun, Eric Yeats, Yang Ouyang, Martin Kuo, Jianyi Zhang, Hao Yang, and Hai Li.
\newblock Min-k\%++: Improved baseline for detecting pre-training data from large language models.
\newblock \emph{arXiv preprint arXiv:2404.02936}, 2024.

\bibitem[Zhong et~al.(2024)Zhong, Dasgupta, and Tanvir]{zhong2024watermarking}
Xin Zhong, Agnibh Dasgupta, and Abdullah Tanvir.
\newblock Watermarking language models through language models.
\newblock \emph{arXiv preprint arXiv:2411.05091}, 2024.

\end{thebibliography}
\bibliographystyle{plainnat}
\balance
\newpage
\appendices

\section{{Solving the ``Bad'' Model Issue with the Rank-based Hypothesis Test}}
\label{apx:bad_model}
{
In \Cref{ssec:case_studies_1}, we illustrated how a naive estimation of the FPR of an MI attack on a set of non-members could fail if we were in the presence of a ``bad'' model, which acts unusually on the targeted point $x$.
Indeed, we showed that the true FPR here (over the probability of training the model $f$) is $1/2$, while the \emph{a posteriori} estimate in \eqref{eq:fpr_aposteriori} yields $0$ in this case.}

{
We now show that our alternative rank-based hypothesis test does not suffer from this issue.}

{
Note that since this test requires choosing the target $x$ \emph{a priori} from a set of counterfactuals $\mathcal{X}$, the true FPR of the test (given in \eqref{eq:fpr_rank}) is a probability taken over \emph{both} the training of the model and the random choice of the target.
\Cref{lemma:fpr} says that if we guess that $x$ is a member if its loss has rank $k$ among other samples in $\mathcal{X}$, the FPR is $k/|\mathcal{X}|$ (note that this estimate is independent of the model $f$!).
This remains true for the ``bad'' model example:}

{
\begin{align*}
    \text{FPR} =& \Pr_{\substack{z \sim \mathcal{X} \\ f \gets \texttt{Train}(D)}}[\texttt{rank}(f, z, \mathcal{X}) \leq k \mid H_0] \\
    =& \frac{1}{|\mathcal{X}|} \Pr[\texttt{rank}(f, x, \mathcal{X}) \leq k \mid x \notin D] 
    \\&+ \frac{|\mathcal{X}|-1}{|\mathcal{X}|} \Pr[\texttt{rank}(f, x', \mathcal{X}) \leq k \mid x' \notin D] \\
=& \frac{1}{|\mathcal{X}|} \left(\frac12 + \frac12 \frac{k}{|\mathcal{X}|}\right) + \frac{|\mathcal{X}|-1}{|\mathcal{X}|} \left(\frac12 \frac{k-1}{|\mathcal{X}|-1} + \frac12 \frac{k}{|\mathcal{X}|}\right) \\
= & \frac{k}{|\mathcal{X}|} \;.
\end{align*}
}

{Of course, if we happen to sample the target $x$ for which we get a ``bad'' model, we will incur a false positive with 50\% probability.
But this event is rare \emph{a priori} and so our 
computation of the FPR remains correct.}

\end{document}